\newtheorem{theorem}{Theorem}
\newtheorem{lemma}[theorem]{Lemma}
\begin{document}

\title{\huge Planning through Workspace Constraint Satisfaction and Optimization}

\author{\vspace{0.5in}\\\textbf{Weifu Wang} \  and \  \textbf{Ping Li} \\\\
Cognitive Computing Lab\\
Baidu Research\\
%No. 10 Xibeiwang East Road, Beijing 100193, China \\
10900 NE 8th St. Bellevue, WA 98004, USA\\\\
  \texttt{\{harrison.wfw,  pingli98\}@gmail.com}
}

\date{\vspace{0.5in}}
\maketitle

\begin{abstract}\vspace{0.3in}

\noindent In this work, we present a workspace-based planning framework, which though using redundant workspace key-points to represent robot states, can take advantage of the interpretable geometric information to derive good quality collision-free paths for even complex robots. Using workspace geometries, we first find collision-free piece-wise linear paths for each key point so that at the endpoints of each segment, the distance constraints are satisfied among the key points. Using these piece-wise linear paths as initial conditions, we can perform optimization steps to quickly find paths that satisfy various constraints and piece together all segments to obtain a valid path. We show that these adjusted paths are unlikely to create a collision, and the proposed approach is fast and can produce good quality results.
\end{abstract}

\newpage

\section{Introduction}

We present a workspace-based planning framework, taking advantage of the describable geometries and the simplicity of point-based planning. The proposed method also integrates the constraint Jacobian method used in many planning processes, but instead of taking the gradient with respect to time, we use the gradient to compute the changes the path needs to make to better satisfy the constraints in the optimized trajectories. The proposed method is simple to understand and implement, and the result is comparable to the state-of-the-art sampling-based planning algorithms quality-wise and can help identify the geometrical and topological differences among paths.
% functional space? discuss

% motivations
Many modern planning algorithms work in the configuration space. The introduction of the configuration space revolutionized the understanding of robot motion and kinematics and helped advance the planning and manipulation significantly. At the same time, in the configuration space, the trajectories can be difficult to visualize, the inverse kinematics changes are hard to derive based on the violations and other constraints, and the configuration space obstacles are hard to describe. Many researchers have attempted to map the workspace geometry to the configuration space geometry, but many unresolved challenges exist.

There has also been an excellent geometrical planning algorithm, including the famous Piano Mover's algorithm series~\citep{schwartz1983pianoI, schwartz1983piano, schwartz1983pianoIII}. These algorithms are mathematically correct and complete but challenging to implement. Moreover, the algorithms can become very complex when concave shapes are involved. The main challenge is that the interactions between different geometries can be complex. In the proposed work, however, we intend to use simple geometries like points and spheres to simplify the geometrical interactions and use geometrical constraints to {\em coordinate} the trajectories among different {\em key-points} on the robot.

The proposed framework can be described in simple terms as the following steps. We conduct workspace decomposition through triangulation and categorize the free regions of the workspace. We extract adjacency relations among the free regions and find paths on the adjacency graph on a high level. We find paths for points starting from various initial positions using straight-line connections. We ensure these points correspond to different key points on the robot and do not collide with the obstacles on these straight-line segments. We then set the distance constraints among the key points on the robot and {\em coordinate} the paths using the initial straight-line segments to satisfy the distance constraints using constraint Jacobian methods. If there are preferred configurations for the robot, they can bias the straight-line paths within the workspace. The paths are then checked for collisions and further sub-divided if collisions were detected. The {\em key-points} include but are not limited to the endpoints for each rigid link.

Piece-wise, the paths can be close to optimal through the {\em coordination} (constraint satisfaction and optimization) process. The line-segment paths are optimal in a 2D workspace using a visibility graph on a global level. In a 3D workspace, such guarantees do not hold, but there are visibility-like algorithms to find good quality paths efficiently~\citep{durand19993d, durand20023d}. We conducted experiments and compared them to sampling-based motion planning algorithms. Our approach is fast and can find good quality paths without expensive optimization or consuming much memory. In particular, we show that our algorithm works well against narrow passages. The proposed approach can be perceived as a divide and conquer approach in the workspace, where the intermediate states subdivide the overall problem into locally solvable geometric problems.

Within the proposed framework, we analyze the nonholonomic constraints. Also, we demonstrate that the proposed framework is effective when narrow passages are present. However, some corner cases may require additional processing within the proposed framework.

\section{Related Work}

Planning is one of the most fundamental problems in robotics. While many efforts have been spent studying ways to generate good or valid paths, many challenges still exist in robotics planning. One challenge is that the robots usually have a high degree of freedom (DOF), leading to high-dimensional configuration spaces. Another challenge is that the complex robot controls can hardly be directly integrated into planning.

There have been studies on planning in the workspace, most notably the Piano Mover's problem series~\citep{schwartz1983pianoI, schwartz1983piano, schwartz1983pianoIII}. The authors used semi-algebraic sets to describe the robots and the obstacles, and the papers showed a provable correct and complete algorithm for simple geometries. However, the approach is also very complex, and the algorithms can be hard to implement. At the same time, as the robots we would like to plan become more and more complex, the geometric planning methods start to give way to sampling-based motion planning algorithms.

In the last 1980s, the sampling-based planning algorithms like Rapid-exploring Random Tree (RRT)~\citep{lavalle1998rapidly, kuffner2000rrt} and Probabilistic RoadMap (PRM)~\citep{kavraki1996probabilistic} were introduced, and the planning was conducted in the configuration space~\citep{lozano1983spatial}. The sampling-based motion planning was simple to implement and was very quick to find solutions in open spaces, thus was adopted by many researchers and became one of the most often used planning methods. Since then, there have been many modifications to PRM and RRT algorithms, including methods trying to push towards optimal solutions~\citep{karaman2011sampling}, and methods to get around obstacles~\citep{amato1998obprm}, and other methods to bias the sampling to achieve better sampling efficiency~\citep{gamme2020batch}.

To further improve the quality of the paths, optimization processes were introduced to create better paths in the configuration space, such as CHOMP~\citep{ratliff2009chomp, zucker2013chomp} and STOMP~\citep{kalakrishnan2011stomp}. These optimizers build upon a known trajectory in configuration space, optimize towards better dynamic constraints, and can also help avoid obstacles. Many variations have been introduced to optimize towards different goals, including more human-like~\citep{dragan2015movement, marinho2016functional}, or more energy efficient~\citep{navarra2018energy, luo2015trajectory}.

The planning approach has also considered dynamic and control constraints in various forms. In addition to trajectory optimizers like CHOMP and STOMP, shortest paths have also been studied for different systems~\citep{dubins1957curves, reeds1990optimal, balkcom2002time, balkcom2006time,balkcom2018dubins, wang2018time,  wang2021towards,wang2022towards}. These shortest paths often are used as motion primitives, but the existing algorithms to find such paths can only find solutions when no obstacles exist.

\section{Workspace Initial Paths}

We can have a relatively accurate description of the constraints or obstacles in the workspace. From the simple polyhedrons to complex triangle meshes, we can usually derive helpful information from these descriptions, such as visibility or the existence of narrow passages. In contrast, even simple geometries can have complex boundaries in configuration space, especially when the robots are complex. Robots' structural information and workspace analysis cannot be reused in different planning scenarios.

When the workspace or the configuration space is relatively empty, sampling-based motion planning algorithms can quickly find feasible and sometimes even good quality configuration space trajectories. Currently, sampling-based motion planning algorithms often are applied in complex environments. Many modifications and additional steps are added to these simple algorithms to overcome various challenges in the complex configuration spaces.

\newpage

The workspace point-based paths serve as a {\em topological guidance} within the planning environment. Connectivity information is the prerequisite for the existence of paths. Many studies attempted to investigate the topological information in the configuration space (C-space). The topological properties in the C-space are challenging to extract and may not be closely related to workspace information. However, we can show that the Configuration space is also a fiber bundle, with the base space being the workspace. Therefore, the workspace topology can be mapped to the C-space topology using the fibers. The C-space topology can also be represented as a sub-space topology of the product topology of the workspace topology for different points on the robot. Knowing the workspace topology (connectivity) can be the basis for deriving C-space topology and furthering the robot's path geometry (trajectory).

The following would be the {\em key idea} of the proposed framework: segregating the topology planning from the geometric planning. We know that a valid trajectory must build on a connected topological path, which depends on a connected workspace path for points on the robot. From the topological path, i.e., the workspace point-based path, we can extrude the geometric paths and {\em coordinate} (constrained optimization) the paths for key points to satisfy the geometric relations on the robot. Eventually, a valid full (kinematic) trajectory is found for the robot. We show the basic outline of the framework and demonstrate that the framework has advantages over sampling-based motion planning in some scenarios.

A point or a sphere's path is easy to find in the workspace even when the obstacles are complex. Let us select $k$ key-points on the robot with appropriate radii $r_{\cdot}$ to approximate the geometric properties of the robot at these points. By {\em key-points}, we select the endpoints for each link, but other points can be included for other purposes. Distance constraints exist among these key points and are fixed with respect to the robot. Suppose the path for each key point has $r_{\cdot}$ clearance to the obstacles, then if we adjust the paths to satisfy the distance constraints among the key points at each time step, the resulting path is most likely to be valid for the entire robot. The main assumption is that following line-segment paths, when the two ends of a link have no collision, the entire link is most likely to be collision-free. We can then approximate the entire robot as a collection of links with different-sized endpoints.

\vspace{0.1in}
\begin{lemma}
In a 2D workspace, given a rigid link of length $l$ with end-points $p_a$ and $p_b$, and paths for $p_a$ and $p_b$ without collision and the distance between $p_a$ and $p_b$ remains $l$ along the path. Then, the area swept by the link is bounded by the paths followed by $p_a$ and $p_b$. If the paths for $p_a$ and $p_b$ are simple without loops, the link will not collide with the obstacle along the path.
\end{lemma}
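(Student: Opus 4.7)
The plan is to parameterize the swept area as the image of a continuous map from the unit square and read off its boundary from the boundary of the square, and then to use a Jordan-curve / connectedness argument for the no-collision conclusion. Concretely, I would define $\phi:[0,1]^2\to\mathbb{R}^2$ by $\phi(s,t) = (1-s)p_a(t) + s p_b(t)$, where $t$ parameterizes time and $s$ the fraction along the link. The swept area is $S := \phi([0,1]^2)$, and the four edges of the parameter square map to $\gamma_a$ (path of $p_a$), $\gamma_b$ (path of $p_b$), the initial link $L_0$, and the final link $L_1$ respectively. For the first claim I would check that $\partial S \subseteq \phi(\partial [0,1]^2) = \gamma_a\cup\gamma_b\cup L_0\cup L_1$: any $p\in S$ whose preimages all lie in the open interior of $[0,1]^2$ is an interior point of $S$, because a small open neighborhood of any such preimage maps continuously into $S$; hence $p$ must have some preimage on $\partial[0,1]^2$ in order to be in $\partial S$. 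This proves that the swept area is bounded by the two endpoint paths together with the two link positions at the start and end of the motion.

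For the second claim, under the simple-without-loops hypothesis I would interpret the concatenation $C := \gamma_a \cdot L_1 \cdot \gamma_b^{-1} \cdot L_0^{-1}$ as a simple closed curve, so that the Jordan curve theorem produces a closed topological disk $D$ with $\partial D = C$. Since $S$ is bounded and connected with $\partial S \subseteq C$, one gets $S \subseteq D$. Any obstacle $O$ is disjoint from $\gamma_a$ and $\gamma_b$ by the collision-free endpoint paths, and from $L_0$ and $L_1$ since the initial and final configurations are collision-free, hence disjoint from $\partial D$. By connectedness, $O$ lies entirely inside or entirely outside $D$, and the no-loops clause rules out being inside. Therefore $O \cap S = \emptyset$ and the link never collides.

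The hard part will be pinning down the ``simple without loops'' hypothesis and handling degenerate cases. If one only assumes $\gamma_a$ and $\gamma_b$ are individually simple, a link that rotates through a half-turn produces two simple semicircular endpoint paths while the swept region is an annulus, so an obstacle in the central hole would be swept over even though the endpoints avoid it. The hypothesis therefore has to be read as the stronger statement that $C$ itself is a simple closed curve whose bounded complementary component contains no obstacle at time $t=0$. A secondary subtlety is that $\phi$ is not generally an open map --- its Jacobian can vanish when the two endpoint velocities are both parallel to the link --- so the containment $\partial S \subseteq \phi(\partial[0,1]^2)$ must be verified by the elementary neighborhood argument above rather than by appealing to openness of $\phi$. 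Once these points are nailed down, the remaining plane topology is routine.
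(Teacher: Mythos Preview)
Your proposal is far more careful than the paper's own argument, which reads in its entirety: ``Along the straight lines, the area swept by a link is bounded by the paths of the endpoints, i.e., the boundary of the swept area are the paths for the endpoints, plus the initial and final link position. Since the endpoint has no collision, the link has no collision.'' That is essentially a restatement of the claim rather than a proof, with the implicit restriction to straight-line endpoint paths. Your parameterization $\phi(s,t)=(1-s)p_a(t)+s\,p_b(t)$ together with the Jordan--curve separation argument is a genuinely different and substantially more rigorous route; it also makes explicit the role of the initial and final link positions $L_0,L_1$ in the boundary, which the paper mentions only in passing. Most importantly, your half-turn counterexample (two simple semicircular endpoint paths sweeping an annulus) shows that the lemma, read literally, is false: the hypothesis ``simple without loops'' on $\gamma_a$ and $\gamma_b$ individually is not enough, and you are right that one really needs the concatenation $C=\gamma_a\cdot L_1\cdot\gamma_b^{-1}\cdot L_0^{-1}$ to be a simple closed curve enclosing no obstacle. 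The paper does not address this at all.

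One genuine gap in your sketch: the step ``any $p\in S$ whose preimages all lie in the open interior of $[0,1]^2$ is an interior point of $S$, because a small open neighborhood of any such preimage maps continuously into $S$'' is precisely the openness assertion you later say you are avoiding. Mapping a neighborhood \emph{into} $S$ does not by itself give an open neighborhood of $p$ inside $S$; for that you need the image to contain a neighborhood of $p$, which is exactly local openness of $\phi$. At points where the Jacobian of $\phi$ vanishes (both endpoint velocities parallel to the link, as you note), this can fail and a fold can place a boundary point of $S$ over an interior parameter. To close this you would need either a degree/winding-number argument (e.g., show that for $p\notin\phi(\partial[0,1]^2)$ the map $\phi|_{\partial[0,1]^2}$ has nonzero winding number around $p$ whenever $p\in S$, which forces $p\in\mathrm{int}\,S$), or else work under the straight-line assumption the paper tacitly uses, where $\phi$ is affine in $t$ for each fixed $s$ and the swept region is easily seen to be the convex hull (or a simple quadrilateral/triangle) of the four corners.
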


% to add proof, HERE
\begin{proof}
Along the straight lines, the area swept by a link is bounded by the paths of the endpoints, i.e., the boundary of the swept area are the paths for the endpoints, plus the initial and final link position. Since the endpoint has no collision, the link has no collision.
\end{proof}

% 3D case, almost the same, but only guarantees within the plane swept by the endpoints. We need to make sure the surface does not collide along the path. That is when the path classes come in.

In 2D, it is easy to get the shortest point-based paths using visibility graph~\citep{ghosh1997recognizing}, and the path can easily be adjusted for circles with radii $r$ without collision. Moreover, this method makes it easy to find paths among narrow passages. Using segment paths for each key point, we then adjust the paths to satisfy the distance constraints so that the resulting paths can directly be applied to robots.

The same method has two main challenges in 3D. First, in 3D, it is not guaranteed that the visibility graph can produce the shortest paths. Second, the swept surface is no longer guaranteed to be collision-free even when the paths for the endpoints are collision-free. Compared to the 2D visibility graphs, the number of 3D shortest path candidates can grow exponentially. Direct search for the combination of the vertices and edges of the obstacles can be expensive. Methods like 3D visibility graphs can help to reduce the computation complexity but can still be expensive.

\newpage

Using triangulation, we introduce a workspace decomposition method that will categorize free regions of the workspace based on the number of adjacent obstacles. We only briefly describe the how one may get such a workspace decomposition and path, omitting some of the technical details and proof of correctness. More details are provided in~\citet{wang2022towards}. Each free-region is {\em approximately-convex}. If the collision-free paths are followed by the endpoints of a link belonging to the same free region, the path followed by the link is most likely to be collision-free. Some cases break the above assumption, but we will not focus on these cases in this work. We then extract the shortest paths between adjacent free regions to find the globally shortest path.

The categorization process can be described as follows. First, we construct triangulation among the free regions using the vertices of the obstacles. Triangulation (triangle or tetrahedron) can contain vertices from a single obstacle (and the boundary of the space), two, or three disjoint obstacles. If the triangulation contains vertices from no more than two obstacles, label them as adjacent to those obstacles. One free region can exist adjacent to a single or two obstacles, but concave regions on a single obstacle can have a triangulation component with all vertices from the same object. Therefore, the triangulation process starts after the convex hull of each obstacle is constructed, and intersecting convex hulls are merged after identifying common regions. If the triangulation component contains vertices from three or more disjoint obstacles, the corresponding free region may be adjacent to more than three obstacles. Therefore, we need to {\em merge} these free regions if they share common boundaries. We label the free region as a common region among these obstacles, whose vertices are part of the merged triangulation components.

Based on the categorization, we can assign a label to each free region, coded by the number of adjacent obstacles and the corresponding obstacle labels, using Godel numbering. We can also identify adjacent free regions with labels and construct an adjacency graph $G_{\mathcal{A}}$ among the free regions. If an obstacle has a sharp edge or pointy spikes, it is usually marked as the boundaries of different free regions unless a pillar-like geometry is fully contained in the area adjacent to only two obstacles. In such cases, we can treat such geometries as pillars and further partition the free region to avoid collision with the sweeping surface of the link endpoints. We will omit these cases in this work, as the work's main contribution is the workspace-based planning framework. The adjacency graph shows the connectivity relation of the workspace, and the paths on the graph correspond to workspace topological paths. These paths and the robot topology (structural information) form the C-space topology.
% We have shown the derivation of these topologies in another related work~\citep{}.
% For simplicity, in this work, we assume we can identify piece-wise linear paths for link endpoints around these spikes or pillar shapes.

We then plan piece-wise linear paths for each key point within each free region. We will ensure that the start and the end of these piece-wise linear paths satisfy the distance constraints among the key points. Using these piece-wise linear paths, we then adjust the paths of the endpoints to satisfy the distance constraints.
% The adjustment process is presented in the next section.

There are flaws to the proposed approach. In particular, the distance traveled by the key points does not always correspond to the same amount of changes in configuration space. For example, consider a humanoid robot, a slight rotation in the hip joint may lead to the hands moving by a large amount. Though the change in configuration space is small, the distance traveled by the hand is significant. On the other hand, the hand's movement also symbolizes the amount of energy the hip joint needs to spend to move the hand, as the work done (distance traveled) in the workspace is one of the most straightforward ways of measuring the energy.

We can use the geometrical information to adjust the robot state to avoid collisions in the workspace. The same type of derivation is complex in configuration space. Again let us consider a humanoid robot, and we have the robot in collision with the obstacle at the right arm region. If we treat the robot as a collection of links, we can find the nearest key point to the collision and move the key point away from the collision point, especially if we have information such as penetration distance. Then, we can quickly adjust the robot state by moving the relevant part of the robot and leaving most of the robot in place.

\newpage

In contrast, in the configuration space, the inverse kinematic procedure must be performed to find the necessary movement for the robot. However, other collisions may be introduced if the movement is performed at an inappropriate joint. Building upon the workspace partition, we can find workspace paths as follows:
\begin{enumerate}
    \item Construct triangulation for the workspace and find free regions and their labels.
    \item Construct an adjacency graph for the free regions.
    \item Select a path on the adjacency graph connecting start and goal with a low estimated cost.
    \item Test if the sequence of free regions can allow a path for the robot; if not, return to step three.
    \item Find collision-free visibility path for the key-point with most neighbors on the robot.
    \item Find simple straight-line paths for other key points.
\end{enumerate}

In this paper, our primary focus is to show that the workspace optimization and coordination among multiple points are feasible, and the details for obtaining the piece-wise linear paths are only briefly described above. Many existing processes can  be employed to help find the paths, e.g., 3D visibility algorithms~\citep{durand19993d, durand20023d}.

When planning the piece-wise linear paths for the endpoints, we adapt a visibility principle: choose the straight-line connections with minimum crossings from the start to the targeted region among key points. In case such simple paths collides with the obstacles, we adapts the following strategy to find a valid path. First, we need to identify a hierarchical order for the key points, with one being the root or the primary key point. We refer such hierarchical order as the topology of the robot. Then, we will ensure the primary key point is collision free following a simple strategy, such as visibility. Then, we follow a Breadth First Search type of approach to rotate all other key points to stay clear of the obstacle, until all key points are collision free. This outcome of the proposed approach is similar to that of the obstacle-based PRM~\citep{amato1998obprm}, but the search space is the workspace. The desired outcomes of the two approaches are similar: find a valid configuration near the obstacles.

\begin{algorithm}[t]
      \caption{FindNextState ($P$)}\label{alg:FindNextState}
    \SetKwInOut{Input}{input}
    \SetKwInOut{Output}{output}
    \Input{$P$: the current robot state; $G$: visibility graph or free region adjacency graph}
    \Output{A new state}
    Pick base point $p_o$\;
    Find next point $q$ on the shortest point path for $p_o$\;
    $\hat{p}_o\leftarrow q$\;
    Adjust $\hat{p}_o$ to be collision free\;
    $Q\leftarrow p_o$ ($Q$: Queue)\;
    $V\leftarrow\emptyset$
    \While {Exist point not planned} {
        $p_c\leftarrow $ pop $Q$\;
        $V\leftarrow p_c$\;
        $N\leftarrow $ neighbors of $p_c$ on robot\;
        Push all element in $N$ into $Q$\;
        \For{$p\in N$} {
            $\vec{d}\leftarrow$ direction from $P(p)$ to $p_c$\;
            $l\leftarrow$ distance between $p$ and $p_c$\;
            $\hat{p}\leftarrow p_c + l\frac{\vec{d}}{\|\vec{d}\|}$\;
            Adjust $\hat{p}$ to avoid collision\;
        }
    }
    \Return $\hat{p_\cdot}$\;
\end{algorithm}

Once how the robot may travel from the start to the goal is found, i.e., a topological path (connectivity) is found, we can use Algorithm~\ref{alg:FindNextState} to find the intermediate states. The main goal is to admit collision-free piece-wise linear paths between adjacent states. Moreover, this algorithm attempts to avoid self-crossing through the visibility principle, reducing the need to add more intermediate states during planning. %\vspace{-0.3in}

We assume the free region can allow the robot to pass in the above algorithm. The returned configuration can still have a collision with the obstacles. We can use penetration depth to find the direction for the adjustments. If a collision cannot be avoided, we will choose another topological path for planning.

\newpage

~
\vspace{0.3in}

\section{Workspace Path Optimization}

Once the paths for the endpoints are given, we need to adjust them to ensure they can be followed by the robot while satisfying the rigid link constraints, i.e., distance constraint. We first show the process for optimizing with respect to generic distance constraints. We then discuss other constraints and show some simple analyses and results. Note that the paths computed using the proposed algorithm are kinematic. The main advantage of the proposed approach is that the previously found paths provide good initial conditions for the optimization, so the process can be fast and is guaranteed to converge. Further optimizations can be introduced to improve the quality of the resulting paths.

\subsection{Geometric constraints}
\label{sec:constraint_geometry}

Distance constraint is one of the fundamental geometrical constraints on robots. When using configurations, the distance constraints are converted to a variable of joint angles and link lengths. One of the main motivations for using configurations is to reduce redundancy in the representation of the robot state. However, the configuration changes do not map geometric properties well in the workspace. For example, finding a good nearby configuration to avoid the current collision is difficult. In the workspace, however, finding a collision-free movement direction for each key point is much easier and interpolating the motion of other points to satisfy distance constraints while remaining collision-free.

In general, let us represent the distance constraint as follows, given two endpoints $p_a$ and $p_b$ of a link with length $l$:
\begin{eqnarray}
C: \|p_a - p_b\|_2 = l\label{eq:constraint_dis}\ .
\end{eqnarray}

Constraint Jacobian method is often used when planning for multiple points with different constraints. For example, in~\citet{littlefield2019kinodynamic}, the authors used constraint Jacobian $\frac{\partial C}{\partial t} = \mathbf{J}\mathbb{v}$ to compute the adjustments needed for the tensegrity robot endpoints to satisfy distance constraints and compute appropriate cable lengths. In most applications of constraint Jacobian, the $\mathbf{J}$ is used to help compute the velocity adjustment to the next time-step to satisfy constraint better. In the proposed approach, we will have a complete path for each point as the initial condition, and we compute the Jacobian for the constraints to find the adjustments needed for each point to satisfy the constraints and start the next iteration of optimization. So, instead of taking derivatives with respect to time, we take derivatives with respect to the coordinates of the key-points,
\begin{eqnarray}
\frac{\partial C}{\partial p^i_{\cdot}} = \mathbf{J}\Delta {p_{\cdot}}\label{eq:deltap}\\
\|p^i_a - p^i_b\| = d\label{eq:current_dis}\\
p^{i+1}_{\cdot} = p^{i}_{\cdot} + \frac{l-d}{2}\Delta {p_{\cdot}}\label{eq:updatep}
\end{eqnarray}

Given trajectories for $p_a$ and $p_b$, endpoints of a rigid link of length $l$. Using constraints in Eq.~\eqref{eq:constraint_dis}, and follow update rule in Eq.~\eqref{eq:deltap} and Eq.~\eqref{eq:updatep}, the resulting paths will satisfy the distance constraints. When no objectives are present, the update reduces violation. Since the updates between different points are independent, no conflict will occur. The constraints can be satisfied.

The satisfied distance constraint does not mean the overall trajectory is of good quality. Even though the initial conditions are the lower bounds for the path lengths in the proposed approach, the updates may lead to longer paths than necessary to satisfy the distance constraints. Therefore, in order to get good quality paths, we further introduce an objective function:
\begin{eqnarray}
\min\sum_{i=1}^{N-1}\|p_a(i+1)-p_a(i)\|\label{eq:minlengthObj}
\end{eqnarray}

We can further transform the above minimization objective into a different form:
\begin{eqnarray}
\min\|p_a(i+1) - p_a(i)\| + \|p_a(i+2)-p_a(i+1)\|
 -\|p_a(i+2)-p_a(i)\|\label{eq:minlengthObj_alt}
\end{eqnarray}

We can then introduce a small step-size $\alpha$ to reduce the total path length. When no obstacle exists, and no velocity limit exists for either endpoint, the shortest path can be a straight line, while the other follows a trajectory that satisfies the distance constraint. The above constraint is only for a single key point. We can add more key points to the objective function and achieve the smallest movement for all key points. Further, if there exists velocity constraint, we can  have:
\begin{eqnarray}
\dot{p_a(i)} = p_a(i+1) - p_a(i)\nonumber\\
C: \|\dot{p_a(i)}\| \leq v_a\label{eq:velLimit}
\end{eqnarray}

Eq.~\eqref{eq:velLimit} can be viewed as another distance constraint, and we can update the point positions using the same derivations as Eq.~\eqref{eq:deltap} and Eq.~\eqref{eq:updatep}.

Due to the distance constraints, the problem is not convex. Intuitively, one can discover that the gradient direction for satisfying the distance constraint and the minimization for the path lengths are almost perpendicular, satisfying the necessary condition for the convergence of the optimization process. Current experimental results show the quick convergence for the proposed process. In Figure~\ref{fig:x-r-path}, we show a $4$R and a $6R$ planar arm and the optimized shortest paths from straight-line initial paths using less than $10$ iterations.

\begin{figure}[h]

\vspace{-0.1in}

\begin{center}
\mbox{
    \includegraphics[width=2.6in]{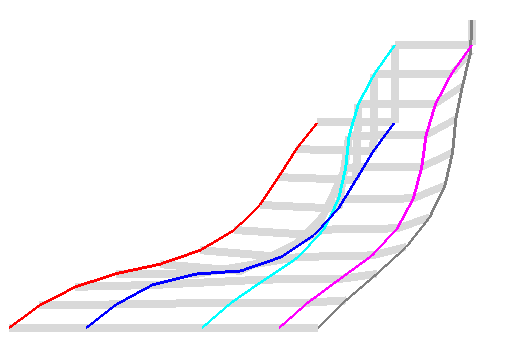}

    \includegraphics[width=2.6in]{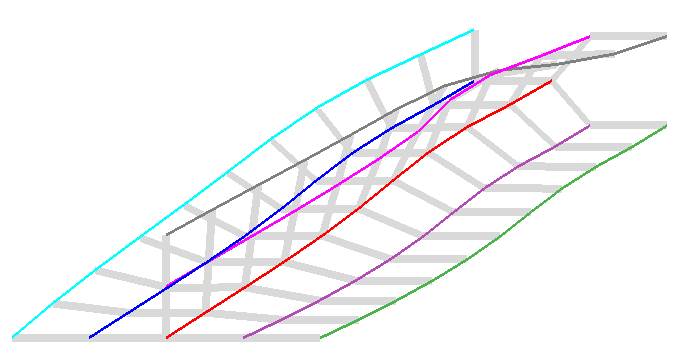}
}
\end{center}

\vspace{-0.25in}

    \caption{The paths returned by the proposed approach within less than $10$ iterations return smooth paths with small number of interpolated intermediate points. }
    \label{fig:x-r-path}
\end{figure}

Similar to CHOMP~\citep{ratliff2009chomp}, we can also integrate dynamic constraints into the optimization and the collision avoidance constraint. It is easy to integrate collision avoidance constraints, especially for endpoints. In 2D, collision avoidance constraint is not necessary within a link. In 3D, collision avoidance for endpoints does not guarantee the free of collision along with the link. So, we do not consider collision avoidance constraints in this work.

\begin{figure}[b!]

\vspace{-0.2in}

\begin{center}
\mbox{
    \subfigure[Linear interpolation may create self intersection.]{
    \includegraphics[width=2.5in]{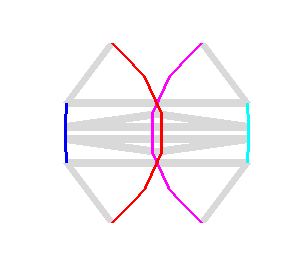}\label{fig:self-crossing}}\hspace{0.3in}
    \subfigure[Adding intermediate configurations avoids self-crossing. ]{
    \includegraphics[width=2.9in]{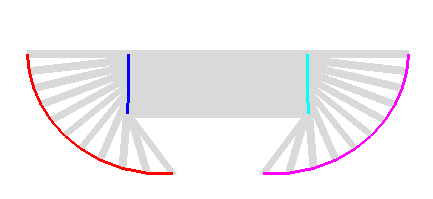}\label{fig:intermediate}}

}
\end{center}

\vspace{-0.25in}

    \caption{One may need to add intermediate states to avoid self-crossing. }
    \label{fig:self-intersection}\vspace{-0.2in}
\end{figure}

As with smoothness constraint, we can similarly add penalties to sudden large changes in acceleration, the second derivatives along the trajectory. However, the existing minimization and velocity bound constraint can produce smooth trajectories from the numerical experiments. For example, as shown in Figure~\ref{fig:x-r-path}, the trajectories for different endpoints (shown in different colors) are all smooth curves, even though all paths started as straight line segments.

Another constraint we need to consider is the no self-collision on the robot. Some intermediate configurations may contain self-intersection when we use the straight-line paths as initial solutions. As shown in Figure~\ref{fig:self-crossing}, the first and last links cross near the parallel configuration, and both the links would cross the second link along the path. In this case, we need to add intermediate states to avoid self-collision. For example, between the two folded states shown in Figure~\ref{fig:self-crossing}, we can easily find a path satisfying all constraints, as shown in Figure~\ref{fig:intermediate}. The self-intersection can be found through collision detection. Once the collision is identified, a new intermediate configuration will be added. We will follow the unfolding of the carpenter's rule procedure~\citep{abbott2009generalized} to find what is similar to an unfolded state for the colliding links. This problem, similar to singularity, is also present in other planning approaches.

In the experiments conducted, while minimizing the length of the paths, when only the distance constraint is present, the process always converges, and the Hessian matrix of the Lagrangian are always positive semi-definite. In this case, both the objective functions and the constraints are of the form $\|p_a-p_b\|_2 = \sqrt{(x_a-x_b)^2+(y_a-y_b)^2}$, therefore the Hessian contains elements of the form
\begin{eqnarray}
\frac{g(x_a, y_a, x_b, y_b)}{((x_a-x_b)^2+(y_a-y_b)^2)^{\frac{3}{2}}}
\end{eqnarray}
where $g(x_1, y_1, x_2, y_2)$ can be any of the following
\begin{eqnarray}
\pm (y_a-y_b)^2\\
\pm (x_a-x_b)^2\\
\pm (x_a-x_b)(y_a-y_b)
\end{eqnarray}
The diagonal elements are all positive square terms. If we consider only two points, where the objective is the minimization of the length for $p_a$ and $p_b$, and the constraint is the distance between $p_a$ and $p_b$, we can have the Hessian of the following form
\begin{eqnarray}
\Delta_{xx} L = \begin{bmatrix}
A | B \\
\hline
B | C
\end{bmatrix}
\end{eqnarray}
Both $A$ and $C$ have positive diagonal elements, and $B$ has negative diagonal elements. When more key points are being considered, the Hessian can be divided into more blocks. Each block either describes the second-order derivatives between two key points similar to $B$ or describes the second-order derivatives for the same key point at a different time-step similar to $A$. We want to prove the convergence of the optimization process in future work.

\subsection{Nonholonomic constraint}

Another common constraint type in motion planning is velocity constraints, such as no side-way velocity constraint for systems like Dubins and Reeds-Shepp vehicles. There do not exist good steering functions for these nonholonomic systems, and inverse kinematics is hard to find using a simple procedure. Methods like Rapid exploring Random Tree (RRT)~\citep{kuffner2000rrt} are commonly used to find paths for such systems. The quality of the paths is only guaranteed to approach optimal asymptotically, and the information is hard to reuse for planning between different start and goals. Existing solutions to optimal paths for such nonholonomic systems are only guaranteed to be resolution complete, and only numerical solutions exist except for some highly symmetric systems~\citep{dubins1957curves, reeds1990optimal, balkcom2002time, balkcom2006time, wang2012analytical, wang2012sampling}.

% Using the proposed framework, we show that control constraints for Dubins vehicles like systems have simple derivative forms.

\newpage

In general, the nonholonomic constraints can be written as $W\cdot \dot{q} = c$, where $q$ is the state of the robot, and the $\dot{q}$ is the velocity of the robot. Constant $c$ is known to the system, and $W$ is a matrix that depends on the current state of the robot. For example, for Dubins car or Reeds-Shepp car, the constraint~can~be~written~as:
\begin{eqnarray}
\begin{bmatrix}
-\sin(\theta) \\ \cos(\theta) \\ 1
\end{bmatrix}\cdot\begin{pmatrix}
\dot{x} \\ \dot{y} \\ \dot{\theta}
\end{pmatrix} = 0
\end{eqnarray}
when we represent the robot using the $q = (x, y, \theta)$ configuration. In the workspace, we will need to convert the state representation from $q^c = (x, y, \theta)$ to $q^w = (x_1, y_1, x_2, y_2)$ for 2D rigid body, and change the corresponding parameter for $W$. Using $p_1 = (x_1, y_1)$ and $p_2 = (x_2, y_2)$, we can write the constraint to be the direction of $\dot{p_1}$ must be parallel to $\vec{p_1p_2}$ at next time-step.

When representing the velocity constraint, we need to normalize both vector to make sure the gradient is meaningful. Without loss of generality, let us assume $\|\dot{p_1}\|_2 = \|p_2-p_1\|_2 = 1$. We can have the following:
\begin{eqnarray}
\begin{bmatrix}
x_2 + \dot{x_2}\Delta t - x_1 - \dot{x_1}\Delta t\\
y_2 + \dot{y_2}\Delta t - y_1 - \dot{y_1}\Delta t
\end{bmatrix}\cdot \begin{pmatrix}
-\dot{y_1} \\ \dot{x_1}
\end{pmatrix} = 0\\
\Rightarrow\begin{bmatrix}
\frac{\partial C}{\partial\dot{x_1}} \\ \frac{\partial C}{\partial\dot{y_1}} \\ \frac{\partial C}{\partial\dot{x_2}} \\ \frac{\partial C}{\partial\dot{y_2}}
\end{bmatrix} = \begin{bmatrix}
y_2 - y_1 + \dot{y_2}\Delta t \\ x_1 - x_2 + \dot{x_2}\Delta t \\ -\dot{y_1}\Delta t \\ \dot{x_1}\Delta t
\end{bmatrix} = \mathbf{J}\Delta\dot{q}\label{eq:constraint_control}
\end{eqnarray}

\begin{figure}[h]

\vspace{-0.25in}

    \centering
    \subfigure{
    \includegraphics[width=3in]{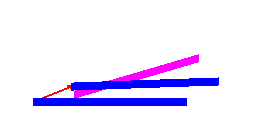}
    \label{fig:control_diff_rotation}
    }
    \subfigure{
    \includegraphics[width=3in]{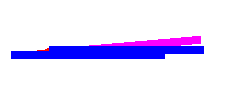}
    \label{fig:control_diff_trans}
    }

    \vspace{-0.3in}

    \caption{Updated velocities for different initial conditions. Only when the initial velocity is very close to the direction of the link vector (right), the gradient shown in Eq.~\eqref{eq:constraint_control} will push the robot to follow a rotational control (left). }
    \label{fig:arm_opt}\vspace{0.2in}
\end{figure}

Only when the current $\dot{p_1}$ is translation along the direction of $\vec{x_1x_2}$, the gradient of $\frac{\partial C}{\partial\dot{y_1}}$ will be along the direction of $\vec{x_1x_2}$. Otherwise, the gradient will push the robot more and more toward a rotational control, as shown in Figure~\ref{fig:arm_opt}. Following the derivation, we can see that the translation is not stable for Dubins-like vehicles. Even though the straight line segments are good initial conditions for path lengths, it is not good for control optimization, which can easily be shown in the velocity space (Figure~\ref{fig:vel_plot}). Directly using the proposed process will push the controls towards rotation and eventually near a turn-drive-turn-like trajectory, oscillating between optimization iterations.

\newpage

\begin{figure}[t]

\begin{center}
\mbox{
    \includegraphics[width=3in]{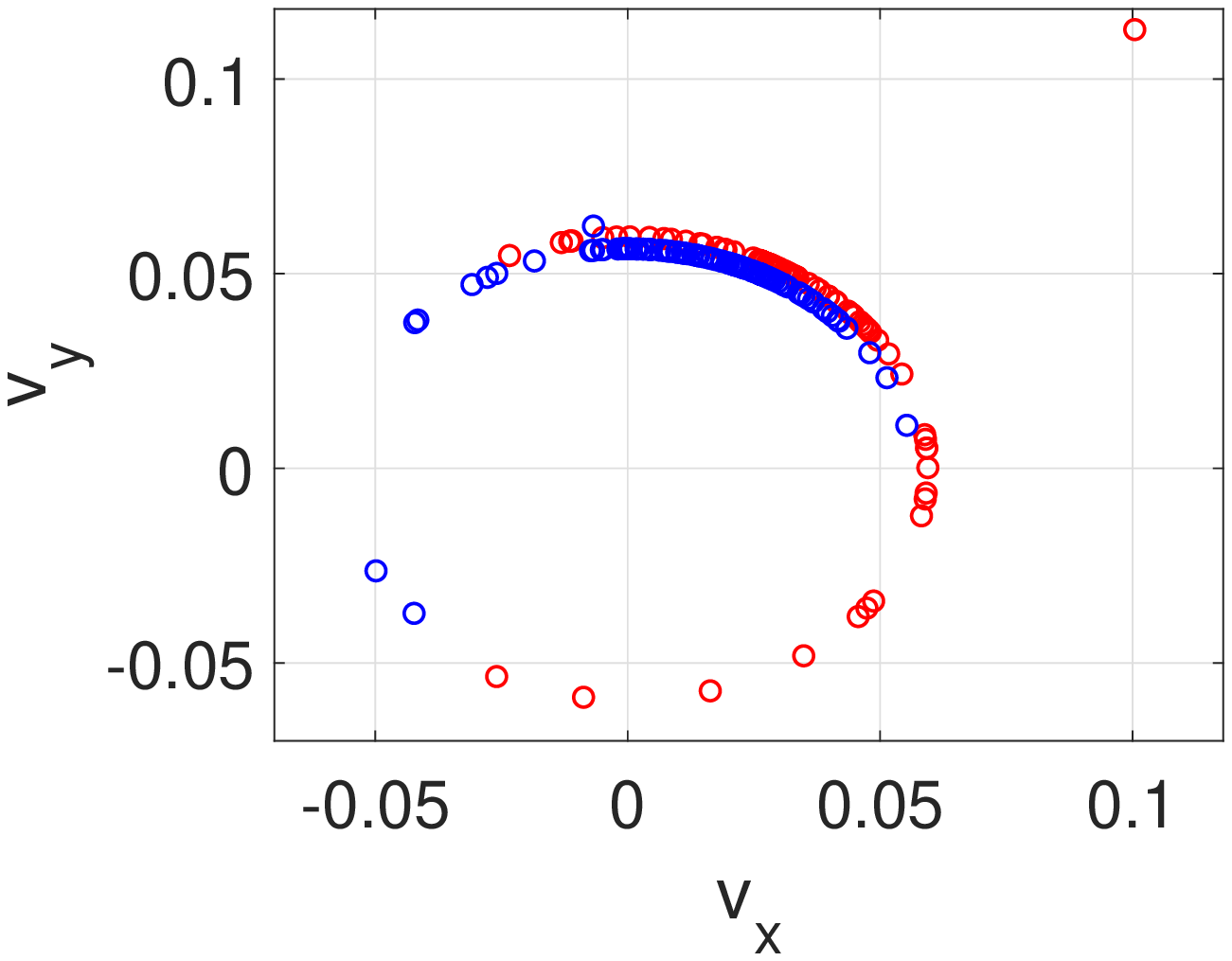}

    \includegraphics[width=3in]{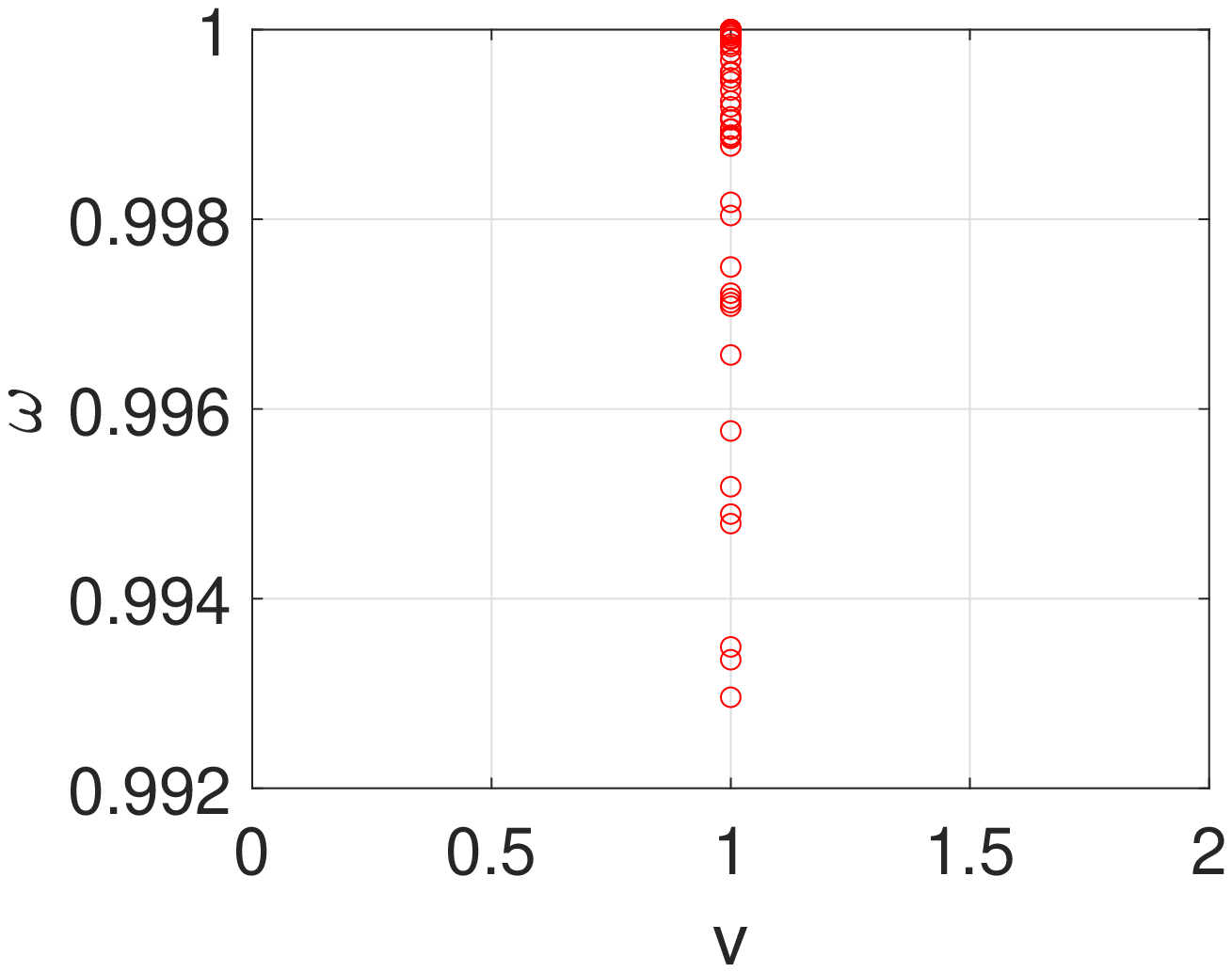}
}
\end{center}

\vspace{-0.2in}

    \caption{The velocity for two key-points after satisfying distance constraints (left), and the same velocities converted to $(v, \omega)$ space~(right). }
    \label{fig:vel_plot}\vspace{0.2in}
\end{figure}

Converting to the $(v, \omega)$ velocity space, we will see the following velocity scatter shown on the right of Figure~\ref{fig:vel_plot}. The optimal control theory proved that the velocity would maximize towards the corners in the control space to minimize the trajectory length. In this case, the corners are $(1, 1)$ and $(1, -1)$ after scaling. The straight-line path provides a good initial condition for the geometric optimization, but the resulting trajectory is bad for optimizing against the nonholonomic constraints.

The nonholonomic constraint of $W\cdot\dot{q} = c$ is known to be challenging to integrate. In the analysis above, we can see that the derivative of this constraint is also unstable, especially near translation for the Dubins-like system. The main challenge to integrating the nonholonomic constraint is whether the resulting trajectory can reach the goal. Therefore, It is very likely that the velocity update needs to consider the constraints coming from reaching the goal. Future work's main focus is finding a stable optimization process to find paths that satisfy the nonholonomic constraints.

\section{Experiments}

We show some experiments for the complete proposed process, from partitioning the workspace to finding piece-wise linear paths for multiple key points, and then optimize to approach optimal paths for the given robot from start to goal. We further compare the proposed approach with sampling-based motion planning algorithms conducted in the configuration space, such as PRM and RRT. The proposed approach aims to find collision-free good, quality kinematic paths for the given robot in the environment from scratch, rather than optimizing paths after an initial valid path is found. In this section, we consider only the distance constraints on the robot.

Let us consider the 2D environments with multiple random polygonal obstacles and some with narrow passages. We consider planning for a four-link arm with a moving base. All links have the same width. In Figure~\ref{fig:env1}, we show an example of the planning. The environment has only a few obstacles, but the clearance between obstacles is small. The goal is at the bottom right corner.

\begin{figure}[t]
    \centering
    \subfigure{
    \includegraphics[width=2.7in]{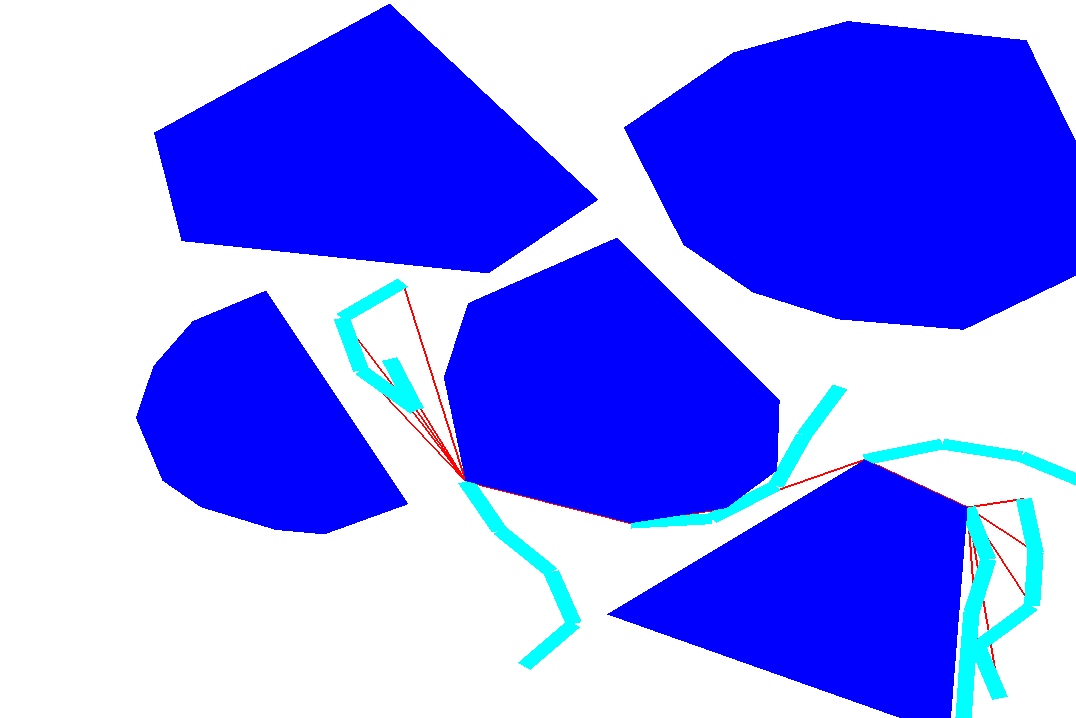}
    \label{fig:env1-steps}
    }\hspace{0.2in}
    \subfigure{
    \includegraphics[width=3.2in]{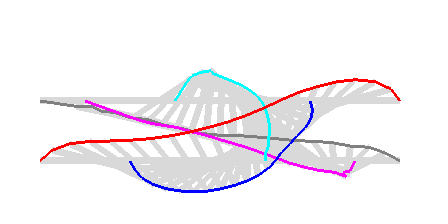}
    \label{fig:four-4-reverse}
    }
    \caption{A 2D environment with a 4R arm, starting from the configuration among the obstacles to the goal on the lower right corner. The path and intermediate configuration found using the proposed method are shown on the top. The 4R arm needs to reverse to reach the goal configuration near the goal. An illustration of the optimized reversing trajectory is shown at the bottom. }
    \label{fig:env1}\vspace{0.2in}
\end{figure}

Using the proposed approach, we first find the visibility path in the environment and then use Algorithm~\ref{alg:FindNextState} to find the intermediate states. We then use the optimization process to interpolate the path between each adjacent configuration. The direct line paths have a collision with the obstacle between the second and the third intermediate state, so another intermediate state is introduced to avoid the collision. The method is to find the visibility path between two endpoints and add the intermediate collision-free state as a transition. The proposed approach has several intermediate states compared to sampling-based motion planning, but the number of intermediate states is much smaller than the number of samples used in a PRM or RRT.

Once the feasible collision-free intermediate states are found, we use the process in Section~\ref{sec:constraint_geometry} to find a feasible path between adjacent states. In Figure~\ref{fig:env1}, using Algorithm~\ref{alg:FindNextState}, we find the last intermediate state with the links almost parallel to the goal state, but the entire orientation for the serial arm is inverted. Therefore, a rotation in place is needed to adjust the orientation. At the bottom of Figure~\ref{fig:env1}, we show a trajectory for the key points to achieve a similar change in orientation.

We will need to overcome the relatively long and narrow passage using sampling-based motion planning. Since the free region is small, the collision-free samples are hard to find in the given environment, further improving the cost of sampling-based planning. In the environment shown in Figure~\ref{fig:env1}, no valid path is found after placing $4000$ samples. The runtime for placing these samples takes many minutes, while the method proposed in this work took a fraction of a second to find a valid path.

Of course, in relatively open space, the sampling-based motion planner, such as PRM or PRM$^{*}$, can find a solution relatively fast, while the quality of the path takes a while to approach the quality found by the proposed approach. When a path is found using PRM$^{*}$, the path quality is very close to that found by our proposed method. However, in narrow passage environments, as shown in Figure~\ref{fig:env1}, the time cost of PRM$^{*}$ is much greater than the proposed method for the serial arms.

The proposed approach works well when the obstacles are polygons or polyhedrons with few vertices. The proposed approach may become slower when the environments have many obstacles made with a massive number of triangle meshes. Instead of directly checking for geometric collisions, the proposed approach needs to analyze the vertices to find piece-wise linear paths. Therefore, in a more complex environment, we may use a convex hull of the obstacles to find how to connect the start and goal and use the geometry of each obstacle to check for collision later.

In the meanwhile, the proposed method can also work well when part of the robot needs to remain in place while exploring the feasible state of other parts, such as humanoid walking. Though not explored experimentally in this paper, we will try to integrate the nonholonomic and other physical constraints, such as gravity and inertia, into the model.

\section{Conclusion}
\label{sec:conclusion}

We propose a planning framework that mainly plans in the workspace. Rather than considering the geometry of the entire robot, we use the geometries of the obstacles to partition the workspace and generate good piece-wise linear paths for various {\em key-points} on the robot. Then, optimizations are conducted to coordinate the paths to yield valid paths for the robot. The proposed process plans for points in the workspace and then applies simple optimization steps to converge quickly on good-quality paths. We compared the proposed approach with sampling-based algorithms and found that the proposed approach can be faster than the sampling-based motion planning to generate similar quality paths and have more advantages in environments with narrow passages. The proposed approach can be perceived as a divide and conquer approach in the workspace, where the intermediate states subdivide the overall problem into locally solvable geometric problems.

\vspace{0.5in}

\bibliographystyle{plainnat}
\bibliography{refs}

\end{document}